\documentclass[letterpaper]{article} 
\usepackage[draft]{aaai2027}  
\usepackage[hyphens]{url}  
\usepackage{graphicx} 
\usepackage{natbib}  
\usepackage{caption} 
\usepackage{amsmath}
\usepackage{amssymb}
\usepackage[table]{xcolor}
\usepackage{booktabs}
\usepackage{amsthm}
\usepackage{algorithm}    
\usepackage{algorithmic}  
\usepackage{multirow}
\usepackage{makecell}
\definecolor{mygray}{gray}{0.92}
\definecolor{oursblue}{RGB}{232,241,250}
\newcommand{\ours}{\textbf{GOTS}}
\newcommand{\OursRowColor}{\rowcolor{oursblue}}

\newtheorem{proposition}{Proposition}

\title{GOTS: Greedy Orthogonal Token Selection for High-Resolution Vision-Language Models}
\author{
Jun Ling,
Tao Huang,
Junzhuo Liu,
Bowen Tang,
Peng Wang
}

\affiliations{
School of Computer Science and Engineering,\\
University of Electronic Science and Technology of China\\
\texttt{cs.lingjun@gmail.com},
\texttt{p.wang6@hotmail.com}\\
}

\begin{document}

\maketitle
\begin{abstract}

Modern vision-language models (VLMs) increasingly rely on dynamic or high-resolution visual encoding, producing thousands of visual tokens that substantially increase downstream language-model inference cost. Existing token-reduction methods assess token utility through token-wise importance, query relevance, coverage, pairwise diversity, or subset-level objectives. Our key insight is to view visual token reduction through selected-span complementarity: instead of scoring a token in isolation or through pairwise relations, we assess how much of its feature is orthogonal to the span of the already retained subset. Based on this perspective, we propose Greedy Orthogonal Token Selection (GOTS), a training-free and query-agnostic method. At each step, GOTS selects the token with the largest residual energy orthogonal to the current retained span. This rule exactly maximizes the one-step augmented Gram determinant among candidate additions, giving each greedy step a precise local geometric guarantee for subset expansion.  Across five high-resolution VLM backbones from the Qwen-VL and InternVL families and eleven diverse benchmarks, GOTS achieves higher average performance retention than the strongest evaluated baselines, and a controlled OCRBench study shows that it reduces model-side time-to-first-token after accounting for selection overhead. Code is available at \url{https://github.com/newLLing/GOTS}.

\end{abstract}

\section{Introduction}

Modern vision-language models (VLMs) increasingly adopt dynamic or high-resolution visual encoding to capture fine-grained evidence for tasks such as optical character recognition, document understanding, chart interpretation, and spatial reasoning~\citep{Wang2024Qwen2VLEV,Bai2025Qwen3VLTR,Bai2025Qwen25VLTR,Wang2025InternVL35AO}. However, this capability often produces thousands of visual tokens for a single image, whose quadratic self-attention cost in the downstream language model substantially increases prefill latency and activation memory.

We study training-free visual token reduction in this high-resolution regime. Selection indices are computed from vision-encoder features, keeping only the selected visual tokens for subsequent large language model (LLM) inference, so discarded tokens incur no downstream LLM computation. We focus on dynamic or high-resolution VLMs that generate long, input-dependent visual sequences, where this bottleneck is particularly pronounced.

Existing methods estimate token utility using visual attention, query relevance, coverage, or feature diversity. VisionZip~\citep{yang2025visionzip} retains attention-dominant tokens and merges the remainder by feature similarity, while DivPrune~\citep{alvar2025divprune} favors tokens with large pairwise distances. MMTok~\citep{dong2025mmtok} formulates selection as a subset-level coverage objective over query and visual tokens, whereas CDPruner~\citep{zhang2025cdpruner} uses a query-conditioned determinantal point process (DPP) to balance relevance and diversity. GOTS complements these token-wise, pairwise, coverage, and determinant objectives by directly using the contribution beyond the collective retained span as its primary selection criterion.

Our key insight is to view visual token reduction through selected-span complementarity. Rather than scoring a token only in isolation or through pairwise relations, we assess the feature component it contributes beyond the retained span. Under a linear approximation, this contribution can be quantified by the candidate's projection residual. Selected-span complementarity captures a subset-level relation: a candidate separated from each retained token can still substantially overlap with directions represented by their collective span. Coverage and DPP objectives model subsets through coverage functions or query-conditioned kernels; GOTS measures contribution beyond the collective span.

Motivated by this perspective, we formulate visual token reduction as a row-subset selection problem and propose GOTS, a training-free and query-agnostic approach for high-resolution VLM inference. GOTS measures each candidate by its residual component outside the currently selected span, greedily retains the token with the largest residual energy, and updates the remaining residuals through orthogonal projection. The selected indices are restored to their original order, determining which visual tokens remain for LLM inference. GOTS requires neither additional training nor access to the textual query and does not explicitly construct an $N\times N$ token-similarity kernel.

This selection principle has a precise geometric interpretation. The Gram determinant of a selected feature subset measures the squared volume spanned by its representations and vanishes under linear dependence. We show that, under nonzero pivot residuals, adding a candidate multiplies the determinant by that candidate's residual energy relative to the retained span. Consequently, each GOTS iteration maximizes the Gram determinant among one-token augmentations of the current subset. This establishes an exact one-step characterization of selected-span complementarity and defines the geometric guarantee provided by the proposed greedy selection rule.

We evaluate GOTS on eleven benchmarks covering general VQA, OCR, document and chart understanding, perception, spatial reasoning, and mathematical reasoning, using five high-resolution VLM backbones from Qwen-VL and InternVL families (7B--32B parameters). In the primary Qwen2.5-VL-7B comparison, GOTS achieves the highest average retention at three token budgets. Across four additional backbones, it maintains an average-retention advantage over the strongest baseline at every ratio, with margins that widen on average under more aggressive compression. A controlled OCRBench study shows a net model-side time-to-first-token (TTFT) reduction after accounting for selection overhead.

Our main contributions are summarized as follows:
\begin{itemize}
\item We identify selected-span complementarity as a visual-token selection principle, using residual energy relative to the retained feature span to quantify each candidate's contribution beyond the collectively retained representation.
\item We introduce GOTS, a training-free and query-agnostic selector that greedily retains the token with the largest residual energy. We establish that each selection exactly maximizes the next multiplicative Gram-determinant factor, providing a one-step geometric characterization without constructing an $N\times N$ similarity kernel.
\item We evaluate GOTS on eleven benchmarks and five high-resolution VLM backbones, showing the highest average retention in the primary comparison, cross-backbone advantages over the strongest evaluated baselines, and model-side TTFT reduction on OCRBench after selection overhead.
\end{itemize}

\section{Related Work}

\paragraph{High-Resolution Vision-Language Models.}
VLMs typically combine a vision encoder, a modality projector, and a pretrained language model. Earlier systems such as LLaVA~\citep{liu2024improvedbaselinesvisualinstruction} and InstructBLIP~\citep{dai2023instructblip} process images at fixed resolution with predetermined-length visual sequences. Recent Qwen-VL~\citep{Wang2024Qwen2VLEV,Bai2025Qwen3VLTR,Bai2025Qwen25VLTR} and InternVL~\citep{Wang2025InternVL35AO} models support dynamic or high-resolution encoding, enabling fine-grained evidence but producing longer, input-dependent sequences. GOTS targets this regime by computing selection indices from vision-encoder features to retain informative visual tokens for language-model inference.

\paragraph{Query-Agnostic Post-Encoder Token Reduction.}
Query-agnostic methods compress visual tokens independently of the textual instruction. VisionZip~\citep{yang2025visionzip} identifies attention-dominant tokens and merges the rest by feature similarity. DivPrune~\citep{alvar2025divprune} greedily maximizes the minimum pairwise distance among retained tokens. These criteria emphasize token-wise importance or pairwise separation. GOTS instead evaluates each candidate by the feature component it contributes beyond the collective retained span, using its orthogonal residual as the selection criterion. This formulation extends token-wise and pairwise criteria with a selected-span criterion that directly measures each candidate's complementary contribution to the retained representation.

\paragraph{Query-Conditioned Post-Encoder Token Reduction.}
Query-conditioned methods incorporate textual instructions into selection. MMTok~\citep{dong2025mmtok} formulates selection as submodular coverage maximization over query and visual tokens. CDPruner~\citep{zhang2025cdpruner} uses a query-conditioned determinantal point process (DPP) with a conditional-similarity kernel to model subset relevance and diversity. Both CDPruner and GOTS employ determinant-based subset geometry, but differ in conditioning, feature representation, and implementation. CDPruner evaluates a query-conditioned kernel determinant, whereas GOTS greedily expands the volume of unnormalized visual features via column-pivoted QR (QRCP), using no explicit $N\times N$ kernel. GOTS adapts this query-agnostic determinant geometry to visual-token selection.

\paragraph{Matrix Subset Selection.}
Column and row subset selection seek compact subsets of a matrix that preserve useful structure~\citep{Boutsidis2009AnIA}. Determinant-based objectives characterize selected-subset volume, while volume sampling provides expected reconstruction guarantees under its sampling distribution~\citep{Deshpande2006MatrixAV}. Exact maximum-volume selection is combinatorial~\citep{Goreinov1997Theory}, motivating greedy approximations. QRCP selects the column with the largest residual norm relative to previously selected columns~\citep{Businger1965Linear}. GOTS applies this classical principle to the transposed visual feature matrix and interprets its pivots as visual-token indices that greedily expand the retained span, yielding an exact local stepwise volume-expansion guarantee.

\begin{figure*}[t]
\centering
\includegraphics[width=\textwidth]{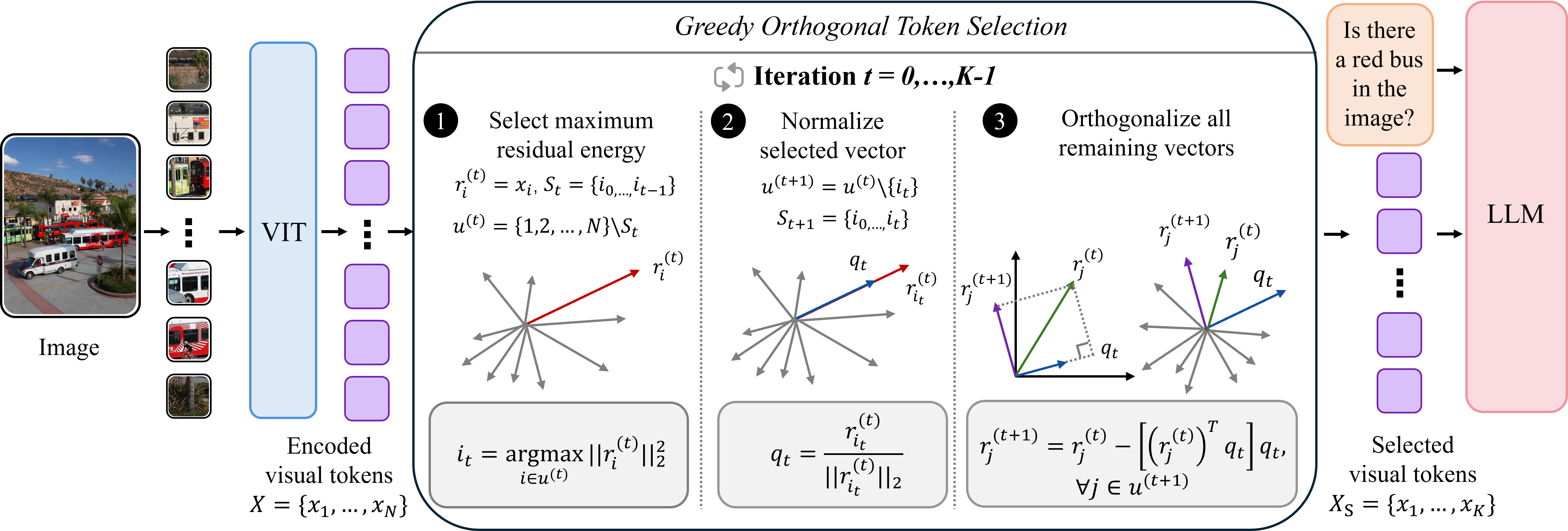}
\caption{\textbf{Overview of the proposed Greedy Orthogonal Token Selection.} The input image is encoded by a Vision Transformer (ViT); token selection is performed using the resulting vision-encoder features to retain informative visual tokens. In each iteration $t=0, \dots, K-1$, the selection process performs three key steps: (1) selecting the token with the maximum residual energy, (2) computing a normalized basis vector, and (3) updating the residuals of all remaining candidate vectors via orthogonal projection.}
\label{fig:method_overview}
\end{figure*}

\section{Method}

\subsection{Overview}
Given an input image encoded by a vision encoder, we obtain $N$ visual features corresponding to $N$ visual tokens; selection operates directly on these features and determines the retained token indices for subsequent inference (Figure~\ref{fig:method_overview}).
\begin{equation}
\mathbf{X} = [\mathbf{x}_1, \mathbf{x}_2, \ldots, \mathbf{x}_N]^{\top} \in \mathbb{R}^{N \times d},
\end{equation}
 where $\mathbf{x}_i \in \mathbb{R}^{d}$ is the vision-encoder feature row vector associated with the $i$-th encoded visual token, and $d$ is the feature dimension. The objective of visual token reduction is to select a compact index set $\mathcal{S} \subset \{1, \ldots, N\}$ with an output token budget $K$, where $K \ll N$ and typically $K \le d$, such that the selected feature matrix $\mathbf{X}_{\mathcal{S}} \in \mathbb{R}^{K \times d}$ preserves information useful to the downstream VLM.

Greedy Orthogonal Token Selection (\ours{}) consists of two steps. First, it applies the QRCP selection rule to $\mathbf X^\top$, greedily selecting the visual feature with the largest residual energy relative to the current selected span. Second, it sorts the selected indices into their original sequence order and forwards retained visual tokens together with their positional metadata to the VLM.

\subsection{Row Subset Selection and a Volume Surrogate}
From a linear approximation perspective, a natural row subset selection problem (RSSP)~\citep{Boutsidis2009AnIA} is to identify $K$ rows whose span minimizes the projection error of the complete visual feature matrix:
\begin{equation}
\mathcal{S}_{\mathrm{rec}}^{*}
= \operatorname*{argmin}_{\substack{\mathcal{S} \subseteq \{1,\ldots,N\}\\|\mathcal{S}|=K}}
\|\mathbf{X}-\mathbf{X}\mathbf{P}_{\mathcal{S}}\|_{F}^{2},
\label{eq:rssp_reconstruction}
\end{equation}
where $\mathbf{P}_{\mathcal{S}}=\mathbf{X}_{\mathcal{S}}^{+}\mathbf{X}_{\mathcal{S}}\in\mathbb{R}^{d\times d}$ is the orthogonal projector onto the row span of $\mathbf{X}_{\mathcal{S}}$, and $\mathbf{X}_{\mathcal{S}}^{+}$ denotes the Moore--Penrose pseudoinverse. Since the rows in $\mathcal S$ lie in their own span, Eq.~\eqref{eq:rssp_reconstruction} reduces to the total residual energy of the discarded features: \begin{equation} \|\mathbf X-\mathbf X\mathbf P_{\mathcal S}\|_F^2 = \sum_{i\notin\mathcal S} \|\mathbf x_i(\mathbf I-\mathbf P_{\mathcal S})\|_2^2. \label{eq:discarded_residual_energy} \end{equation} Thus residual energy outside the retained span directly quantifies the reconstruction gap of the subset; minimizing it over all $K$-subsets is combinatorial.

We therefore use the squared volume of the selected rows as a geometric surrogate for complementarity. Assuming $K\leq\operatorname{rank}(\mathbf X)$, we define the squared volume as:
\begin{equation}
V(\mathcal S)^2
=\det\!\left(\mathbf X_{\mathcal S}\mathbf X_{\mathcal S}^{\top}\right),
\qquad |\mathcal S|=K.
\label{eq:subset_volume}
\end{equation}
The determinant vanishes when the selected rows are linearly dependent and increases with both feature magnitude and directional complementarity. This motivates the surrogate objective:
\begin{equation}
\mathcal{S}_{\mathrm{vol}}^{*}
=\operatorname*{argmax}_{\substack{\mathcal{S}\subseteq\{1,\ldots,N\}\\|\mathcal{S}|=K}}
\det\!\left(\mathbf X_{\mathcal S}\mathbf X_{\mathcal S}^{\top}\right).
\label{eq:max_volume}
\end{equation}
Volume sampling provides expected reconstruction guarantees when a subset is sampled with probability proportional to Eq.~\eqref{eq:subset_volume}~\citep{Deshpande2006MatrixAV}. Our deterministic formulation uses this volume principle as a tractable surrogate. The volume objective measures complementarity among selected rows, while Eq.~\eqref{eq:rssp_reconstruction} measures projection error over all rows; together, they motivate selecting a compact subset that spans diverse and high-energy feature directions. We use volume as a selection surrogate and evaluate its downstream utility empirically.

\subsection{QRCP via Incremental Orthogonalization}
Exact maximum-volume subset selection is combinatorial~\citep{Boutsidis2009AnIA,Goreinov1997Theory}. We instantiate the classical QRCP rule on $\mathbf X^\top$ through incremental modified Gram--Schmidt orthogonalization.

By the base-times-height identity, augmenting an existing subset with a new token multiplies its squared volume by the squared orthogonal distance from that token to the current selected span. We call this factor the \emph{residual energy}. Selecting the candidate with the maximum residual energy therefore maximizes the one-step augmented Gram determinant.

We implement this rule through incremental orthogonalization. Let $\mathcal{S}_t=\{i_0,\ldots,i_{t-1}\}$ denote the selected indices after $t$ iterations, let $\mathcal{U}^{(t)}=\{1,\ldots,N\}\setminus\mathcal S_t$ denote the available indices, and let $\mathbf r_i^{(t)}\in\mathbb R^d$ denote the residual vector of candidate $i$ relative to the current selected span. Initially, $\mathcal S_0=\emptyset$, $\mathcal U^{(0)}=\{1,\ldots,N\}$, and $\mathbf r_i^{(0)}=\mathbf x_i$. We define the residual energy as:
\begin{equation}
e_i^{(t)} \triangleq \|\mathbf r_i^{(t)}\|_2^2,
\label{eq:residual_energy}
\end{equation}
which is the squared orthogonal distance from $\mathbf x_i$ to $\operatorname{span}(\mathbf X_{\mathcal S_t})$.

At each step $t \in \{0,1,\dots,K-1\}$, we apply the maximum-residual-energy rule:
\begin{equation}
i_t = \operatorname*{argmax}_{i \in \mathcal{U}^{(t)}} e_i^{(t)}
= \operatorname*{argmax}_{i \in \mathcal{U}^{(t)}} \|\mathbf r_i^{(t)}\|_2^2.
\label{eq:pivot_rule}
\end{equation}
After selecting $i_t$, we remove it from the candidate pool and normalize its residual to form the unit row vector $\mathbf q_t=\mathbf r_{i_t}^{(t)}/\|\mathbf r_{i_t}^{(t)}\|_2$. We then update each remaining residual by subtracting its projection onto $\mathbf q_t$:
\begin{equation}
\mathbf r_j^{(t+1)}=\mathbf r_j^{(t)}-
\left(\mathbf r_j^{(t)}\mathbf q_t^{\top}\right)\mathbf q_t,
\quad \forall j\in\mathcal U^{(t+1)}.
\label{eq:residual_update}
\end{equation}
In exact arithmetic, the updated residuals are orthogonal to the newly added basis direction.

For numerical stability, if the maximum residual energy $e_{i_t}^{(t)}$ falls below $\epsilon^2$ (e.g., $10^{-12}$), the remaining candidates contribute no numerically independent direction. The procedure terminates early, yielding $|\mathcal S_t|<K$.

\paragraph{Computational and Memory Complexity.}
Each rank-one residual update costs $\mathcal{O}(Nd)$, yielding $\mathcal{O}(KNd)$ time over $K$ selections. The cost is linear in $N$ when $K$ is fixed; with token-retention ratio $r$ and $K=rN$, it becomes $\mathcal{O}(rN^2d)$. Storing the residual features and selected basis vectors requires $\mathcal{O}(Nd+Kd)$ space, which is $\mathcal{O}(Nd)$ when $K\leq N$. Unlike selectors that explicitly materialize all pairwise similarities, \ours{} does not store an $N\times N$ matrix; such a matrix requires $\mathcal{O}(N^2)$ space, while its direct construction generally requires $\mathcal{O}(N^2d)$ computation.

\subsubsection{Exact One-Step Volume Characterization}
The following proposition states the precise guarantee associated with the maximum-residual-energy rule.

\begin{proposition}[QRCP Equivalence and Gram-Determinant Decomposition]
\label{prop:qrcp}
Let $\mathcal S_t=\{i_0,\ldots,i_{t-1}\}$ be the indices selected after $t$ iterations, and assume that the pivot residual energies are nonzero. Then, the selected-set Gram determinant decomposes as:
\begin{equation}
\det\!\left(\mathbf X_{\mathcal S_t}\mathbf X_{\mathcal S_t}^{\top}\right)
=\prod_{\tau=0}^{t-1}e_{i_\tau}^{(\tau)}
=\prod_{\tau=0}^{t-1}\left\|\mathbf r_{i_\tau}^{(\tau)}\right\|_2^2.
\label{eq:volume_product}
\end{equation}
Moreover, for every remaining candidate $j\in\mathcal U^{(t)}$, the augmented Gram determinant satisfies:
\begin{equation}
\det\!\left(\mathbf X_{\mathcal S_t\cup\{j\}}\mathbf X_{\mathcal S_t\cup\{j\}}^{\top}\right)
=\det\!\left(\mathbf X_{\mathcal S_t}\mathbf X_{\mathcal S_t}^{\top}\right)
e_j^{(t)}.
\label{eq:one_step_volume}
\end{equation}
Consequently, \ours{} maximizes the next residual-energy factor of the Gram determinant at every iteration and produces the same pivot sequence as modified Gram--Schmidt QRCP applied to $\mathbf X^\top$, up to tie-breaking.
\end{proposition}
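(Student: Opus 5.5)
The plan is to establish a single structural fact first, that the incremental residuals are exact orthogonal projections, and then read all three claims off a triangular factorization.

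\textbf{Invariant.} By induction on $t$, I will show that $\mathbf q_0,\ldots,\mathbf q_{t-1}$ are orthonormal. I will also show that $\operatorname{span}(\mathbf q_0,\ldots,\mathbf q_{t-1})=\operatorname{span}(\mathbf X_{\mathcal S_t})$. Finally, for every $j\in\mathcal U^{(t)}$, $\mathbf r_j^{(t)}=\mathbf x_j(\mathbf I-\mathbf P_{\mathcal S_t})$, where $\mathbf P_{\mathcal S_t}=\sum_{\tau<t}\mathbf q_\tau^\top\mathbf q_\tau$.

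The base case $t=0$ is trivial. For the inductive step, the residual $\mathbf r_{i_t}^{(t)}$ is orthogonal to $\mathbf q_0,\ldots,\mathbf q_{t-1}$ by hypothesis, and it is nonzero by the nonzero-pivot assumption. So $\mathbf q_t$ extends the orthonormal set. Moreover $\mathbf x_{i_t}=\mathbf r_{i_t}^{(t)}+\mathbf x_{i_t}\mathbf P_{\mathcal S_t}$, which gives the span equality.

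For the residuals, the update in Eq.~\eqref{eq:residual_update} subtracts the $\mathbf q_t$-component. Since $\mathbf r_j^{(t)}$ is already orthogonal to the earlier $\mathbf q_\tau$, the formula $\mathbf r_j^{(t)}\mathbf q_t^\top=\mathbf x_j\mathbf q_t^\top$ holds. Hence $\mathbf r_j^{(t+1)}=\mathbf x_j(\mathbf I-\mathbf P_{\mathcal S_{t+1}})$. This is exactly the point where modified Gram--Schmidt and classical Gram--Schmidt agree in exact arithmetic. It also justifies the earlier remark that $e_j^{(t)}$ is the squared distance to $\operatorname{span}(\mathbf X_{\mathcal S_t})$.

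\textbf{Determinant product, Eq.~\eqref{eq:volume_product}.} Stack the rows $\mathbf x_{i_0},\ldots,\mathbf x_{i_{t-1}}$ in selection order. By the invariant, each row satisfies
\[
\mathbf x_{i_\tau}=\sum_{\sigma<\tau}(\mathbf x_{i_\tau}\mathbf q_\sigma^\top)\mathbf q_\sigma+\|\mathbf r_{i_\tau}^{(\tau)}\|_2\,\mathbf q_\tau .
\]
Let $\mathbf Q_t$ be the matrix with rows $\mathbf q_\tau$. Then $\mathbf X_{\mathcal S_t}=\mathbf L\mathbf Q_t$, where $\mathbf L$ is lower triangular with diagonal entries $\|\mathbf r_{i_\tau}^{(\tau)}\|_2$. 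This is precisely the transposed QR factorization of the pivoted columns of $\mathbf X^\top$.

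Because $\mathbf Q_t\mathbf Q_t^\top=\mathbf I_t$, we get $\det(\mathbf X_{\mathcal S_t}\mathbf X_{\mathcal S_t}^\top)=\det(\mathbf L)^2=\prod_\tau e_{i_\tau}^{(\tau)}$. The determinant is invariant under simultaneous row and column permutations, so the ordering of $\mathcal S_t$ as a set is irrelevant.

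\textbf{One-step identity, Eq.~\eqref{eq:one_step_volume}.} For a candidate $j\in\mathcal U^{(t)}$, apply the same factorization to $\mathcal S_t\cup\{j\}$ with $j$ placed last. There are two cases.

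If $e_j^{(t)}>0$, extend the basis by $\mathbf r_j^{(t)}/\|\mathbf r_j^{(t)}\|_2$. The diagonal of $\mathbf L$ gains the entry $\|\mathbf r_j^{(t)}\|_2$, which yields Eq.~\eqref{eq:one_step_volume}.

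If $e_j^{(t)}=0$, then $\mathbf x_j$ lies in $\operatorname{span}(\mathbf X_{\mathcal S_t})$. The augmented rows are therefore linearly dependent and both sides vanish.

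An alternative for this step is the Schur-complement formula on the bordered Gram matrix. Its Schur complement is $\mathbf x_j\mathbf x_j^\top-\mathbf x_j\mathbf X_{\mathcal S_t}^\top(\mathbf X_{\mathcal S_t}\mathbf X_{\mathcal S_t}^\top)^{-1}\mathbf X_{\mathcal S_t}\mathbf x_j^\top=\|\mathbf x_j(\mathbf I-\mathbf P_{\mathcal S_t})\|_2^2$. I would mention this only as a cross-check.

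\textbf{Consequences.} The current determinant is positive and independent of $j$. So $\arg\max_j$ of the augmented determinant equals $\arg\max_j e_j^{(t)}$, which is Eq.~\eqref{eq:pivot_rule}. QRCP on $\mathbf X^\top$ chooses, at each step, the column with the largest remaining norm after projecting out the previously selected columns. By the invariant, these remaining column norms are exactly $\|\mathbf r_j^{(t)}\|_2$. The two pivot sequences therefore coincide, up to tie-breaking.

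I expect the main obstacle to be stating the invariant carefully. The residuals are updated only against $\mathbf q_t$, not against the full projector, so the inductive orthogonality of $\mathbf r_j^{(t)}$ to all earlier directions is what makes everything exact. The rest is bookkeeping.
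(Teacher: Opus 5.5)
Your argument is correct, but it is organized differently from the paper's proof. The paper asserts that, in exact arithmetic, the normalized pivot residuals form an orthonormal basis of $\operatorname{span}(\mathbf X_{\mathcal S_t})$, and that $\mathbf r_j^{(t)}$ is the orthogonal component of $\mathbf x_j$. It then obtains the one-step identity from the Schur complement of the bordered Gram matrix. Finally, it derives the product formula by iterating that identity from the empty set.

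You reverse the order and change the key tool. You first prove by induction that the MGS-updated residuals equal $\mathbf x_j(\mathbf I-\mathbf P_{\mathcal S_t})$. The paper leaves this step implicit, yet it is needed to identify the Schur complement with $e_j^{(t)}$. You then read the product formula off the triangular factorization $\mathbf X_{\mathcal S_t}=\mathbf L\mathbf Q_t$, and get the one-step identity by extending it.

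Each route has an advantage. Yours makes the QRCP equivalence transparent, since $\mathbf L^\top$ is exactly the $R$ factor of the pivoted QR of $\mathbf X^\top$. The paper's Schur-complement route is shorter and avoids your case split on $e_j^{(t)}=0$. It only requires $\mathbf X_{\mathcal S_t}\mathbf X_{\mathcal S_t}^\top$ to be invertible, which the nonzero-pivot hypothesis already gives.

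One small correction in your inductive step. The identity $\mathbf r_j^{(t)}\mathbf q_t^\top=\mathbf x_j\mathbf q_t^\top$ does not hold because $\mathbf r_j^{(t)}$ is orthogonal to the earlier $\mathbf q_\tau$. It holds because $\mathbf x_j-\mathbf r_j^{(t)}=\mathbf x_j\mathbf P_{\mathcal S_t}$ lies in $\operatorname{span}(\mathbf q_0,\ldots,\mathbf q_{t-1})$, and $\mathbf q_t$ is orthogonal to that span. Both facts are already available from your induction hypothesis, so nothing else changes.
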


\begin{proof}
In exact arithmetic, the previously selected normalized residuals form an orthonormal basis for $\operatorname{span}(\mathbf X_{\mathcal S_t})$. Decompose a candidate as $\mathbf x_j=\mathbf p_j+\mathbf r_j^{(t)}$, where $\mathbf p_j$ lies in the current selected span and $\mathbf r_j^{(t)}$ is orthogonal to it. Applying the Schur-complement identity to the augmented Gram matrix gives Eq.~\eqref{eq:one_step_volume}. Recursively applying this identity from the empty set, whose determinant is defined as one, yields Eq.~\eqref{eq:volume_product}. Maximizing the residual energy $e_j^{(t)}$ therefore maximizes the one-step augmented Gram determinant, exactly matching the pivot rule of QRCP on $\mathbf X^\top$.
\end{proof}

Proposition~\ref{prop:qrcp} provides an exact one-step characterization: each iteration maximizes the augmented Gram determinant over all candidate additions. Equivalently, GOTS selects the largest multiplicative volume factor, expressed by candidate residual energy. This greedy volume-expansion principle supplies a geometric inductive bias, while experiments evaluate how effectively the resulting subsets preserve information useful to efficient downstream VLM inference.

\section{Experiments}

\begin{table*}[t!]
\centering
{\small
\setlength{\tabcolsep}{1mm}
\begin{tabular}{lcccccccc}
\toprule
\multirow{2}{*}{\textbf{Method}}
& \textbf{GQA}
& \textbf{MMB}
& \textbf{MME}
& \textbf{POPE}
& \makecell[c]{\textbf{TextVQA}}
& \textbf{SQA}
& \textbf{OCRBench}
& \makecell[c]{\textbf{Avg.}$^{\dagger}$} \\
& \makecell[c]{Acc. $\uparrow$}
& \makecell[c]{Acc. $\uparrow$}
& \makecell[c]{P+C $\uparrow$}
& \makecell[c]{F1 $\uparrow$}
& \makecell[c]{Acc. $\uparrow$}
& \makecell[c]{Acc. $\uparrow$}
& \makecell[c]{Acc. $\uparrow$}
& $\uparrow$ \\
\midrule

\rowcolor{mygray}
\multicolumn{9}{c}{
\textit{Dynamic Resolution}
$(\mathrm{MinPix}=256\times28\times28,\quad
\mathrm{MaxPix}=2048\times28\times28)$
} \\
Avg. full-token count $\bar{T}_{\mathrm{dyn}}$
& 358.5 & 276.9 & 867.6 & 359.6 & 976.5 & 323.0 & 652.8 & -- \\
Full tokens
& 60.48 & 83.25 & 2327 & 86.16 & 82.95 & 87.46 & 83.80 & 100\% \\

\midrule
\rowcolor{mygray}
\multicolumn{9}{c}{
\textit{Token-Retention Ratio: 20\%}
\quad $\downarrow 80\%$
} \\
Avg. token budget $\bar{K}_{\mathrm{dyn}}$
& 71.7 & 55.4 & 173.5 & 71.9 & 195.3 & 64.6 & 130.6 & -- \\
VisionZip (CVPR 2025)
& 56.80 & 80.33 & 2174 & 83.38 & 65.16 & 84.23 & 59.50 & 89.5\% \\
DivPrune (CVPR 2025)
& 56.70 & 76.98 & 2163 & 80.59 & 64.75 & 80.91 & 48.10 & 85.8\% \\
CDPruner (NeurIPS 2025)
& 55.23 & 78.44 & 2136 & 78.85 & 72.72 & 83.64 & 57.90 & 88.7\% \\
MMTok (ICLR 2026)
& 58.09 & 79.30 & 2217 & 82.38 & 71.93 & 81.61 & 59.60 & 90.5\% \\
\OursRowColor \ours{}
& \textbf{58.65}
& \textbf{80.93}
& \textbf{2299}
& \textbf{85.46}
& \textbf{79.93}
& \textbf{84.73}
& \textbf{74.60}
& \textbf{96.3\%} \\

\midrule
\rowcolor{mygray}
\multicolumn{9}{c}{
\textit{Token-Retention Ratio: 10\%}
\quad $\downarrow 90\%$
} \\
Avg. token budget $\bar{K}_{\mathrm{dyn}}$
& 35.9 & 27.7 & 86.8 & 36.0 & 97.7 & 32.3 & 65.3 & -- \\
VisionZip (CVPR 2025)
& 52.47 & 75.60 & 2003 & 78.90 & 46.17 & 82.30 & 36.90 & 78.4\% \\
DivPrune (CVPR 2025)
& 53.43 & 72.85 & 1957 & 74.99 & 53.77 & 79.57 & 37.30 & 78.2\% \\
CDPruner (NeurIPS 2025)
& 51.76 & 73.88 & 1946 & 73.59 & 63.47 & \textbf{82.85} & 42.50 & 80.8\% \\
MMTok (ICLR 2026)
& 55.09 & 74.74 & 2051 & 78.75 & 59.33 & 80.47 & 43.60 & 82.3\% \\
\OursRowColor \ours{}
& \textbf{55.92}
& \textbf{77.23}
& \textbf{2180}
& \textbf{83.50}
& \textbf{75.01}
& 82.75
& \textbf{63.30}
& \textbf{90.9\%} \\

\midrule
\rowcolor{mygray}
\multicolumn{9}{c}{
\textit{Token-Retention Ratio: 5\%}
\quad $\downarrow 95\%$
} \\
Avg. token budget $\bar{K}_{\mathrm{dyn}}$
& 17.9 & 13.8 & 43.4 & 18.0 & 48.8 & 16.2 & 32.6 & -- \\
VisionZip (CVPR 2025)
& 46.28 & 67.53 & 1677 & 66.38 & 35.24 & 79.57 & 19.70 & 66.2\% \\
DivPrune (CVPR 2025)
& 49.01 & 65.89 & 1739 & 68.45 & 39.03 & 77.05 & 24.90 & 68.5\% \\
CDPruner (NeurIPS 2025)
& 47.33 & 68.13 & 1739 & 64.75 & 50.62 & 80.17 & 28.10 & 70.9\% \\
MMTok (ICLR 2026)
& 50.66 & 65.89 & 1796 & 71.35 & 44.15 & 77.19 & 30.70 & 71.6\% \\
\OursRowColor \ours{}
& \textbf{51.75}
& \textbf{72.59}
& \textbf{2009}
& \textbf{76.61}
& \textbf{67.47}
& \textbf{80.42}
& \textbf{48.10}
& \textbf{82.7\%} \\
\bottomrule
\end{tabular}
}
\caption{\textbf{Comparison on Qwen2.5-VL-7B under native dynamic resolution.}
All methods operate on identical candidate sequences with the same
per-image output budgets. Avg.$^{\dagger}$ is the mean benchmark-wise
performance retention relative to the dynamic-resolution full-token result. }
\label{tab:qwen25vl_7b}
\end{table*}

\subsection{Experimental Setup}

\paragraph{Benchmarks.}
We evaluate general visual understanding on seven primary benchmarks:
GQA~\citep{hudson2019gqa},
MMBench (MMB)~\citep{liu2024mmbench},
MME~\citep{fu2023mme},
POPE~\citep{li2023evaluating},
ScienceQA-IMG (SQA)~\citep{lu2022learn},
TextVQA (validation split)~\citep{singh2019towards},
and OCRBench~\citep{liu2024ocrbench}.
These benchmarks cover visual question answering, perception, scientific
reasoning, scene-text understanding, and optical character recognition.
We additionally evaluate chart, document, spatial, and mathematical reasoning
on
ChartQA~\citep{masry-etal-2022-chartqa},
DocVQA~\citep{Mathew_2021_WACV},
RealWorldQA~\citep{XAI_Grok1_5V_2024},
and MathVista~\citep{Lu2023MathVistaEM}.

\paragraph{Models, Baselines, and Input Settings.}
Our primary comparison uses Qwen2.5-VL-7B-Instruct under native dynamic
resolution, with $\mathrm{MinPix}=256\times28\times28$ and
$\mathrm{MaxPix}=2048\times28\times28$. Input resolution and candidate
sequence length therefore vary across images, with the token budget computed
from each image's candidate sequence. We compare \ours{} with four
representative post-encoder reduction methods:
VisionZip~\citep{yang2025visionzip},
DivPrune~\citep{alvar2025divprune},
CDPruner~\citep{zhang2025cdpruner},
and MMTok~\citep{dong2025mmtok}.
MMTok achieves the highest average retention among these baselines
(Table~\ref{tab:qwen25vl_7b}) and serves as the baseline in the
cross-backbone and additional-task studies.

We further evaluate Qwen2-VL-7B, Qwen3-VL-8B, Qwen3-VL-32B, and
InternVL3.5-8B for cross-backbone generalization. InternVL3.5-8B uses its
native dynamic-tiling configuration; the three additional Qwen models use
fixed high-resolution inputs with
$\mathrm{MinPix}=\mathrm{MaxPix}=2048\times28\times28$, which we also adopt for
the additional-task, efficiency, and ablation experiments on
Qwen2.5-VL-7B-Instruct. This controlled setting retains high-resolution inputs while reducing candidate-sequence variation, while aspect-ratio preservation and vision-grid rounding maintain model-specific visual grids across images under the specified protocol. Within every comparison, all methods receive
identical candidate sequences and output budgets, and each reduced-token result
is compared against the full-token result of the same backbone and input
setting. The cross-backbone study therefore directly evaluates whether the
advantage of \ours{} transfers across models under their specified
high-resolution input evaluation protocols.

\paragraph{Evaluation Protocol.}
We report full-token and reduced-token performance at retention ratios of
20\%, 10\%, and 5\%. For method $m$ on benchmark $b$, performance retention is
$\mathrm{Ret}_{m,b}=s_{m,b}/s_{\mathrm{full},b}\times100\%$, where $s_{m,b}$
and $s_{\mathrm{full},b}$ use the same model, input setting, and evaluation
configuration. Avg.$^{\dagger}$ is the mean of $\mathrm{Ret}_{m,b}$
over reported benchmarks, giving different metric scales equal weight.

\paragraph{Implementation Details.}
Selection indices are computed directly from vision-encoder features and used to retain the corresponding visual tokens for subsequent multimodal sequence construction and inference. We preserve model-specific boundary tokens, the original
sequence order, and the positional coordinates of the selected visual tokens.
Within each comparison, all methods share checkpoints, prompts, decoding
settings, data splits, candidate sequences, and output budgets. All
experiments use single-sample inference (batch size one) and are conducted
with \texttt{lmms-eval}~\citep{lmms_eval2024} on NVIDIA A800 GPUs
with 80\,GB memory.

\paragraph{Efficiency Measurement.}
We measure latency on 500 OCRBench samples using Qwen2.5-VL-7B-Instruct on a single A800-80\,GB GPU, with fixed-resolution inputs averaging 2,093.8 visual tokens per image. TTFT spans from vision encoding to the first generated token, covering vision encoding, projection, token selection, and LLM prefill; selector and prefill latency are measured separately to identify the source of latency changes. We exclude image preprocessing, model loading, data I/O, logging, metric computation, and post-first-token generation to isolate model-side inference costs. This yields a controlled TTFT comparison across the measured first-token inference path.

\begin{figure*}[t!]
\centering
\includegraphics[width=\textwidth]{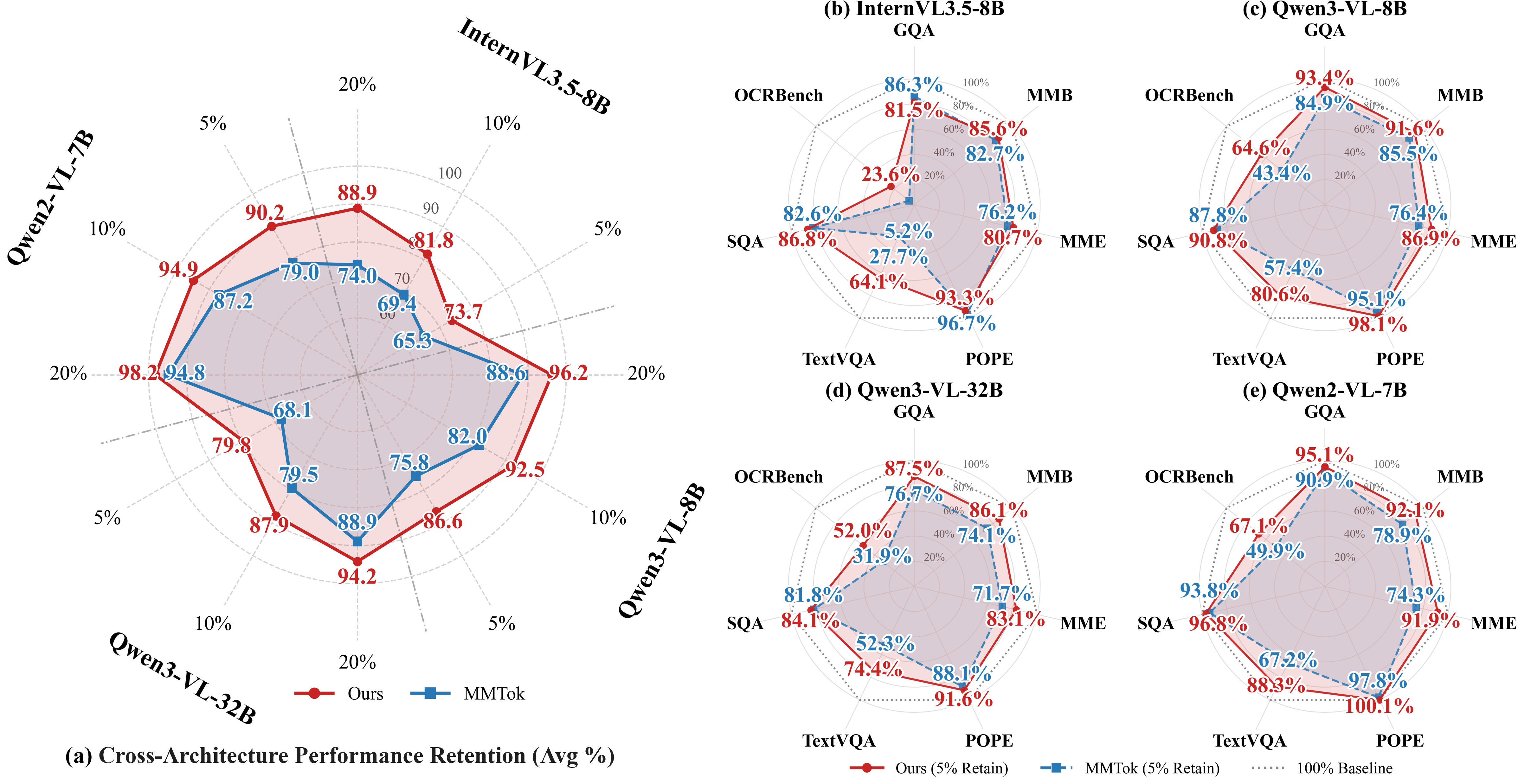}
\caption{\textbf{Cross-backbone comparison between GOTS and MMTok.}
(a) Average performance retention across token-retention ratios.
(b)--(e) Benchmark-wise retention at 5\%.
InternVL3.5-8B uses its native dynamic-tiling protocol (Dyn.); the Qwen
backbones use controlled fixed-resolution inputs (Fix.). For each backbone,
both methods share identical candidate sequences and output budgets,
normalized by the protocol-specific full-token reference.}
\label{fig:model_family}
\end{figure*}

\subsection{Main Comparison on Qwen2.5-VL-7B}

Table~\ref{tab:qwen25vl_7b} presents the primary comparison on
Qwen2.5-VL-7B-Instruct. \ours{} achieves the highest average retention
at every evaluated ratio, retaining 96.3\%, 90.9\%, and 82.7\% of full-token
performance at 20\%, 10\%, and 5\%, respectively. The corresponding results for MMTok, the strongest baseline, are 90.5\%,
82.3\%, and 71.6\%, so the margin grows from 5.8 percentage points at 20\%
retention to 8.6 at 10\% and 11.1 at 5\%. This widening margin is consistent with the selected-span motivation: smaller budgets may make avoiding redundant directions increasingly important.

At 5\% retention, \ours{} gains most over MMTok on OCRBench
(48.10 versus 30.70) and TextVQA (67.47 versus 44.15), while also improving
GQA and POPE, indicating gains across text-intensive and general visual
understanding benchmarks under aggressive compression. These results further suggest that \ours{} is particularly effective on fine-grained visual tasks under the most aggressive token budget.

\subsection{Cross-Backbone and Additional-Task Evaluation}

Figure~\ref{fig:model_family} evaluates whether the advantage of \ours{}
transfers to four backbones. Under the settings
above, \ours{} outperforms MMTok in average retention on all four at every
ratio. At 5\% retention, the margins range from 8.4 to
11.7 percentage points: 8.4 on InternVL3.5-8B, 10.8 on Qwen3-VL-8B, 11.2 on Qwen2-VL-7B, and 11.7 on Qwen3-VL-32B.
Together with Table~\ref{tab:qwen25vl_7b}, these results demonstrate that the average-retention advantage of \ours{} over MMTok transfers across multiple backbones, parameter scales, and high-resolution input protocols in our evaluation.

The benchmark-wise results at 5\% show a consistent pattern across
architectures, with especially strong gains on OCRBench and TextVQA alongside
improvements on GQA. This recurring pattern is consistent with the selected-span motivation and shows that \ours{} maintains average-retention gains across the evaluated model families, parameter scales, and resolution protocols under aggressive visual-token compression.

We further evaluate Qwen2.5-VL-7B on four additional benchmarks under the
controlled fixed-resolution protocol
(Table~\ref{tab:qwen25vl_7b_newtasks}).
\ours{} raises average retention over MMTok from 87.5\% to 90.6\% at 20\%,
from 71.5\% to 80.5\% at 10\%, and from 55.1\% to 69.2\% at 5\%.
At 5\%, the largest improvements occur on DocVQA
(50.75 versus 28.16) and ChartQA (46.08 versus 33.28), followed by
RealWorldQA (60.78 versus 53.73). Across the four tasks, the average-retention advantage becomes
increasingly pronounced as the budget decreases, showing that the advantage of \ours{} over the strongest baseline grows under aggressive compression and extends beyond the primary benchmarks to diverse chart,
document, spatial, and mathematical reasoning settings.

\begin{table}[ht]
\centering
\small
\setlength{\tabcolsep}{0.8mm}
\begin{tabular}{lccccc}
\toprule
\multirow{2}{*}{\textbf{Method}}
& \makecell[c]{\textbf{Chart}\\\textbf{QA}}
& \makecell[c]{\textbf{Doc}\\\textbf{VQA}}
& \makecell[c]{\textbf{RealWorld}\\\textbf{QA}}
& \makecell[c]{\textbf{Math}\\\textbf{Vista}}
& \makecell[c]{\textbf{Avg.}$^{\dagger}$} \\
& \makecell[c]{Acc. $\uparrow$}
& \makecell[c]{ANLS $\uparrow$}
& \makecell[c]{Acc. $\uparrow$}
& \makecell[c]{Acc. $\uparrow$}
& $\uparrow$ \\
\midrule

\rowcolor{mygray}
\multicolumn{6}{c}{
\textit{Fixed Resolution, Full-Token Reference}
} \\
Full tokens
& 76.64 & 94.88 & 68.76 & 39.80 & 100\% \\

\midrule
\rowcolor{mygray}
\multicolumn{6}{c}{
\textit{Token-Retention Ratio: 20\%}
} \\
MMTok
& 65.60
& 74.27
& 63.27
& \textbf{37.50}
& 87.5\% \\
\OursRowColor \ours{}
& \textbf{68.24}
& \textbf{80.96}
& \textbf{65.49}
& 37.00
& \textbf{90.6\%} \\

\midrule
\rowcolor{mygray}
\multicolumn{6}{c}{
\textit{Token-Retention Ratio: 10\%}
} \\
MMTok
& 50.32
& 51.82
& 58.56
& 32.10
& 71.5\% \\
\OursRowColor \ours{}
& \textbf{57.96}
& \textbf{66.94}
& \textbf{63.40}
& \textbf{33.30}
& \textbf{80.5\%} \\

\midrule
\rowcolor{mygray}
\multicolumn{6}{c}{
\textit{Token-Retention Ratio: 5\%}
} \\
MMTok
& 33.28
& 28.16
& 53.73
& 27.50
& 55.1\% \\
\OursRowColor \ours{}
& \textbf{46.08}
& \textbf{50.75}
& \textbf{60.78}
& \textbf{29.80}
& \textbf{69.2\%} \\
\bottomrule
\end{tabular}
\caption{\textbf{Controlled fixed-resolution comparison on four
additional Qwen2.5-VL-7B benchmarks.}
MMTok and \ours{} share identical candidate sequences and budgets
at each retention ratio. Avg.$^{\dagger}$ is mean benchmark-wise
retention relative to the fixed-resolution full-token result.}
\label{tab:qwen25vl_7b_newtasks}
\end{table}

\begin{table}[ht]
\centering
\small
\setlength{\tabcolsep}{1.4mm}
\begin{tabular}{lrrrr}
\toprule
\textbf{Setting}
& $\bar{K}$
& \textbf{Sel.}
& \textbf{Prefill}
& \textbf{TTFT} \\
& & \multicolumn{3}{c}{\textit{Latency (ms)}} \\
\midrule
Full tokens
& 2093.8 & -- & 170.95 & 433.16 \\
\midrule
MMTok 20\%
& 418.4 & 36.35 & 41.35 & 338.90 \\
MMTok 10\%
& 208.9 & 19.07 & 27.56 & 307.95 \\
MMTok 5\%
& 104.2 & 10.42 & 24.98 & 296.76 \\
\midrule
\ours{} 20\%
& 418.4 & 15.05 & 40.81 & 321.57 \\
\ours{} 10\%
& 208.9 & 7.57 & 27.32 & 300.96 \\
\ours{} 5\%
& 104.2 & 4.06 & 24.87 & 295.10 \\
\bottomrule
\end{tabular}
\caption{\textbf{Controlled fixed-resolution inference efficiency on
OCRBench with Qwen2.5-VL-7B.}
$\bar{K}$ is the average number of visual tokens passed to the LLM; Sel.\ is selector
overhead. TTFT spans vision encoding to the first generated token, comprising vision encoding, projection, selector overhead, and LLM prefill.}
\label{tab:inference_efficiency}
\end{table}

\subsection{Inference Efficiency}

Table~\ref{tab:inference_efficiency} reports the controlled fixed-resolution
latency comparison on OCRBench. At 5\% retention, \ours{} reduces the average visual sequence length from 2,093.8 to 104.2 tokens and lowers LLM-prefill latency
from 170.95 to 24.87\,ms, a $6.87\times$ prefill speedup.
Including its 4.06-ms selection overhead, model-side TTFT decreases
from 433.16 to 295.10\,ms, a 31.9\% reduction versus full-token
inference.

At the same budget, \ours{} and MMTok have nearly identical prefill latency
since they pass equal numbers of visual tokens to the LLM; their TTFT
difference primarily reflects selector overhead, which is lower for \ours{} at
all three budgets. The results show \ours{} converts visual-token
reduction into first-token savings across all budgets while preserving
the expected prefill benefit of shorter sequences. Its low selector overhead
enables the TTFT improvement to remain a net model-side gain, including
selection computation.

\subsection{Ablation Analysis}

Table~\ref{tab:ablation_algorithmic} examines four design choices in
\ours{}: span-aware redundancy removal, maximum-residual-energy pivoting,
feature-magnitude preservation, and empty-span initialization.
The first two groups probe the central selected-span insight; the last two examine the roles of feature magnitude and global initialization in the selection procedure. All variants use the same Qwen2.5-VL-7B backbone, fixed-resolution
candidate sequences, and a 5\% token budget, each changing one major component
of the default procedure.

\paragraph{Span-Aware Redundancy Removal.}
Feature-space non-maximum suppression (NMS) replaces span-based
orthogonalization with pairwise suppression, repeatedly selecting the
highest-norm token and removing candidates whose cosine similarity
to it exceeds $\tau=0.9$. The default \ours{} achieves higher
performance on all three benchmarks, providing evidence that
selected-span orthogonalization is preferable to this pairwise design in the
evaluated setting. The gap is consistent with measuring redundancy against the collective retained span rather than individual pivots under this fixed-threshold pairwise design.

\paragraph{Residual-Energy Pivoting.}
Keeping the orthogonal-projection update, we vary only the pivot rule.
Random Pivot samples uniformly from the candidates; Minimum Residual
Energy selects the candidate closest to the retained span. The default maximum-residual-energy rule achieves the strongest performance across all three benchmarks, supporting the importance of the pivot criterion in addition to the projection update. The degradation under minimum-residual-energy pivoting further supports the role of span expansion, rather than orthogonalization alone, among the evaluated pivot strategies in this setting.

\paragraph{Feature Magnitude and Initialization.}
Pre-$\ell_2$ Normalization scales every candidate to unit norm before
selection. Since residual energy reflects both feature magnitude and
unrepresented direction, this preprocessing removes magnitude variation and
retains only directional novelty. Using unnormalized features yields stronger
results on all three benchmarks, indicating that token norms provide
selection information in the evaluated feature space.

The Global-Average Anchor initializes the span with the mean visual feature.
The empty-span and global-average initializations achieve comparable performance across OCRBench, DocVQA, and ChartQA, indicating that \ours{} is relatively insensitive to this initialization choice within the controlled ablation setting reported here.

\begin{table}[ht]
\centering
\setlength{\tabcolsep}{0.5mm}
\small
\begin{tabular}{lccc}
\toprule
\textbf{Algorithmic Design Variant}
& \textbf{OCRBench}
& \textbf{DocVQA}
& \textbf{ChartQA} \\
\midrule

\rowcolor{gray!10}
    \multicolumn{4}{l}{
    \textit{Redundancy Removal Mechanism}
    } \\
    Feature-space NMS ($\tau=0.9$)
    & 15.90 & 13.92 & 21.08 \\

    \midrule
    \rowcolor{gray!10}
    \multicolumn{4}{l}{
    \textit{Pivot Selection Criterion}
    } \\
    Random Pivot
    & 35.90 & 27.10 & 30.24 \\
    Minimum Residual Energy
    & 4.80 & 10.21 & 12.60 \\

    \midrule
    \rowcolor{gray!10}
    \multicolumn{4}{l}{
    \textit{Feature Preconditioning}
    } \\
    Pre-$\ell_2$ Normalization
    & 29.90 & 26.05 & 29.00 \\

    \midrule
    \rowcolor{gray!10}
    \multicolumn{4}{l}{
    \textit{Global-Component Initialization}
    } \\
    Global-Average Anchor
    & 53.90 & \textbf{50.95} & \textbf{46.56} \\

    \midrule
    \ours{} (Default)
    & \textbf{54.00} & 50.75 & 46.08 \\
    \bottomrule
\end{tabular}
\caption{\textbf{Controlled ablation of GOTS design choices.}
All variants use Qwen2.5-VL-7B-Instruct, identical fixed-resolution
candidate sequences, and a 5\% token budget. Each row replaces one default
component: redundancy removal, pivot rule, feature preprocessing, or
initialization.}
\label{tab:ablation_algorithmic}
\end{table}

\section{Conclusion}

We introduced \ours{}, a training-free and query-agnostic method for reducing visual-token sequences in high-resolution VLMs. Its central perspective is selected-span complementarity: each candidate is assessed by its contribution beyond the retained span. GOTS implements this perspective by greedily selecting the token with the largest residual energy. Under nonzero pivot residuals, this rule exactly maximizes the Gram determinant among one-token augmentations of the current subset, providing a local guarantee for greedy volume expansion. Across eleven benchmarks and five high-resolution VLM backbones, GOTS improves average performance retention under aggressive compression, and controlled OCRBench measurements show a net model-side TTFT reduction after accounting for selection overhead. These findings suggest that selected-span linear complementarity is a simple and effective geometric principle for efficient visual token selection.

\bibliography{aaai2027}
\appendix

\clearpage
This appendix is organized as follows.
Section~\ref{app:cross_backbone} reports the complete benchmark-wise
results behind the cross-backbone summary in
Figure~\ref{fig:model_family}, including per-benchmark scores for
InternVL3.5-8B, Qwen3-VL-8B, Qwen3-VL-32B, and Qwen2-VL-7B at all three
retention ratios.
Section~\ref{app:algorithm} provides the full GOTS procedure in
Algorithm~\ref{alg:greedy_selection} together with implementation and
numerical-stability details omitted from the main text.

\section{Full Cross-Backbone Results}
\label{app:cross_backbone}

Figure~\ref{fig:model_family} in the main text summarizes average
retention across backbones; this section reports the underlying
per-benchmark scores. Within each table, both methods receive identical
candidate sequences and output budgets, and retention is normalized by
the full-token result of the same backbone under the same resolution
protocol, matching the evaluation protocol of the main text.

\paragraph{InternVL3.5-8B.}
Table~\ref{tab:internvl35_8b} reports results under InternVL3.5's native
dynamic-tiling protocol. GOTS improves average retention over MMTok from
74.0\% to 88.9\% at 20\%, from 69.4\% to 81.8\% at 10\%, and from 65.3\%
to 73.7\% at 5\%. The margin is smallest on this backbone, consistent
with the 8.4-point gap reported in the main text; we note that
dynamic tiling produces shorter candidate sequences than the
fixed-resolution Qwen protocol, leaving less redundancy for any
span-based criterion to exploit.

\paragraph{Qwen backbones under fixed resolution.}
Tables~\ref{tab:qwen3vl_8b}, \ref{tab:qwen3vl_32b}, and
\ref{tab:qwen2_vl_7b} report results on Qwen3-VL-8B, Qwen3-VL-32B, and
Qwen2-VL-7B under the controlled fixed-resolution protocol. At 5\%
retention, GOTS exceeds MMTok by 10.8, 11.7, and 11.2 points in average
retention, respectively, with the largest per-benchmark margins again on
OCRBench and TextVQA. The dynamic-resolution full-token rows are
reported only as auxiliary references and are not used in the retention
computation.

\section{Algorithm Details}
\label{app:algorithm}

Algorithm~\ref{alg:greedy_selection} summarizes the complete GOTS
procedure. Three details merit emphasis. First, the early-stop check
(line 7) uses tolerance $\epsilon$ on the squared residual energy;
because features are compared in squared norm, we compare
$e_{i_t}^{(t)}$ against $\epsilon^2$, so $\epsilon=10^{-6}$ corresponds
to the $10^{-12}$ threshold stated in the main text. Second, ties in the
argmax (line 6) are broken by selecting the smallest index, making the
procedure deterministic; Proposition~\ref{prop:qrcp} holds under any
tie-breaking rule. Third, selected indices are sorted back into their
original sequence order before being passed to the language model (line
13), so positional metadata remains consistent with full-token
inference; the selection order itself is only internal to the algorithm.

\begin{algorithm}[ht]
\caption{Greedy Orthogonal Token Selection (\ours{})}
\label{alg:greedy_selection}

\begin{algorithmic}[1]
\REQUIRE Visual feature matrix $\mathbf{X} \in \mathbb{R}^{N \times d}$, output token budget $K$, tolerance $\epsilon = 10^{-6}$
\ENSURE Selected and sorted token indices $\mathcal{S}_{\mathrm{out}}$
\IF{$K \leq 0$ \OR $N = 0$}
    \RETURN $\emptyset$
\ENDIF
\IF{$K \geq N$}
    \RETURN $\{1, 2, \ldots, N\}$
\ENDIF
\STATE Initialize residuals $\mathbf{r}_i^{(0)} = \mathbf{x}_i$ for all $i \in \{1, \ldots, N\}$.
\STATE Initialize available set $\mathcal{U}^{(0)} = \{1, 2, \ldots, N\}$ and selected set $\mathcal{S} = \emptyset$.
\FOR{$t = 0$ \TO $K-1$}
    \STATE $i_t \leftarrow \operatorname*{argmax}_{i \in \mathcal{U}^{(t)}} e_i^{(t)}$, where $e_i^{(t)}=\|\mathbf r_i^{(t)}\|_2^2$
    \IF{$e_{i_t}^{(t)} < \epsilon^2$}
        \STATE \textbf{break} \hfill \COMMENT{No numerically independent direction; avoid division by zero}
    \ENDIF
    \STATE $\mathcal{S} \leftarrow \mathcal{S} \cup \{i_t\}$
    \STATE $\mathcal{U}^{(t+1)} \leftarrow \mathcal{U}^{(t)} \setminus \{i_t\}$
    \STATE $\mathbf{q}_t \leftarrow \mathbf{r}_{i_t}^{(t)} / \|\mathbf{r}_{i_t}^{(t)}\|_2$
    \STATE Orthogonalize remaining residuals against $\mathbf{q}_t$:
    \STATE $\mathbf{r}_j^{(t+1)} \leftarrow \mathbf{r}_j^{(t)} - \left( \mathbf{r}_j^{(t)} \mathbf{q}_t^{\top} \right) \mathbf{q}_t, \quad \forall j \in \mathcal{U}^{(t+1)}$
\ENDFOR
\STATE $\mathcal{S}_{\mathrm{out}} \leftarrow \operatorname{sort}(\mathcal{S})$
\RETURN $\mathcal{S}_{\mathrm{out}}$
\end{algorithmic}
\end{algorithm}
\clearpage

\begin{table*}[t!]
\centering
\renewcommand{\arraystretch}{1.0} 
\setlength{\tabcolsep}{3pt}
\resizebox{\textwidth}{!}{%
\begin{tabular}{l | c c c c c c c c}
\toprule
\multirow{2}{*}{\makecell[c]{\textbf{Method}}}
& \textbf{GQA} & \textbf{MMB} & \textbf{MME} & \textbf{POPE}
& \textbf{TextVQA} & \textbf{SQA} & \textbf{OCRBench}
& \makecell[c]{\textbf{Avg.}$^{\dagger}$} \\
~
& \makecell[c]{Acc. $\uparrow$}
& \makecell[c]{Acc. $\uparrow$}
& \makecell[c]{P+C $\uparrow$}
& \makecell[c]{F1 $\uparrow$}
& \makecell[c]{Acc. $\uparrow$}
& \makecell[c]{Acc. $\uparrow$}
& \makecell[c]{Acc. $\uparrow$}
& \makecell[c]{$\uparrow$} \\
\midrule

    \rowcolor{mygray}
    \multicolumn{9}{c}{\textit{Dynamic Resolution, Full-token Reference} $\mathbf{(100\%)}$} \\
    InternVL3.5-8B
    & 61.53 & 82.98 & $1707.41{+}652.85$ & 88.61 & 76.75 & 97.17 & 82.70 & 100\% \\
    \midrule

    \rowcolor{mygray}
    \multicolumn{9}{c}{\textit{Retain 20\% $\bar{T}_{\mathrm{dyn}}$}} \\
    MMTok
    & 58.48 & 75.77 & $1568.16{+}465.71$ & 87.08 & 33.47 & 85.47 & 12.80 & 74.0\% \\

    \OursRowColor \ours{} (Ours)
    & 57.23 & 79.12 & $1614.78{+}606.07$ & 86.70 & 67.91 & 91.42 & 49.30 & \textbf{88.9\%} \\
    \midrule

    \rowcolor{mygray}
    \multicolumn{9}{c}{\textit{Retain 10\% $\bar{T}_{\mathrm{dyn}}$}} \\
    MMTok
    & 55.86 & 70.96 & $1505.26{+}459.28$ & 86.35 & 26.13 & 82.70 & 8.20 & 69.4\% \\

    \OursRowColor \ours{} (Ours)
    & 54.23 & 75.42 & $1537.62{+}505.00$ & 85.22 & 59.87 & 88.55 & 34.70 & \textbf{81.8\%} \\
    \midrule

    \rowcolor{mygray}
    \multicolumn{9}{c}{\textit{Retain 5\% $\bar{T}_{\mathrm{dyn}}$}} \\
    MMTok
    & 53.10 & 68.64 & $1414.46{+}383.21$ & 85.68 & 21.28 & 80.22 & 4.30 & 65.3\% \\

    \OursRowColor \ours{} (Ours)
    & 50.14 & 71.04 & $1427.60{+}477.14$ & 82.69 & 49.23 & 84.38 & 19.50 & \textbf{73.7\%} \\

    \bottomrule
\end{tabular}%
}
\caption{\textbf{Results on InternVL3.5-8B.}
Token reduction is performed under the dynamic-resolution setting. Therefore, the dynamic-resolution full-token result is used as the reference for computing Avg.$^{\dagger}$. Avg.$^{\dagger}$ is computed over all seven benchmarks, with MME measured by the sum of perception and cognition scores.}
\label{tab:internvl35_8b}
\end{table*}

\begin{table*}[t!]
\centering
\renewcommand{\arraystretch}{1.0} 
\setlength{\tabcolsep}{3pt}
\resizebox{\textwidth}{!}{%
\begin{tabular}{l | c c c c c c c c}
\toprule
\multirow{2}{*}{\makecell[c]{\textbf{Method}}}
& \textbf{GQA} & \textbf{MMB} & \textbf{MME} & \textbf{POPE}
& \textbf{TextVQA} & \textbf{SQA} & \textbf{OCRBench}
& \makecell[c]{\textbf{Avg.}$^{\dagger}$} \\
~
& \makecell[c]{Acc. $\uparrow$}
& \makecell[c]{Acc. $\uparrow$}
& \makecell[c]{P+C $\uparrow$}
& \makecell[c]{F1 $\uparrow$}
& \makecell[c]{Acc. $\uparrow$}
& \makecell[c]{Acc. $\uparrow$}
& \makecell[c]{Acc. $\uparrow$}
& \makecell[c]{$\uparrow$} \\
\midrule

\rowcolor{mygray}
\multicolumn{9}{c}{\textit{Dynamic Resolution} $(\mathrm{MinPix}=256\times28\times28,\ \mathrm{MaxPix}=2048\times28\times28)$, \textit{Full-token Comparison}} \\
Qwen3-VL-8B
& 61.59 & 84.54 & $1730.11{+}653.93$ & 88.42 & 81.62 & 94.60 & 88.10 & 99.8\% \\
\midrule

\rowcolor{mygray}
\multicolumn{9}{c}{\textit{Fixed Resolution} $(\mathrm{MinPix}=\mathrm{MaxPix}=2048\times28\times28)$, \textit{Full-token Reference} $\mathbf{(100\%)}$} \\
Qwen3-VL-8B
& 62.08 & 85.65 & $1751.63{+}659.64$ & 89.84 & 82.58 & 93.65 & 85.20 & 100\% \\
\midrule

\rowcolor{mygray}
\multicolumn{9}{c}{\textit{Retain 20\% $\bar{T}_{\mathrm{fix}}$}} \\
MMTok
& 57.85 & 81.95 & $1671.33{+}518.92$ & 87.87 & 67.70 & 87.46 & 57.60 & 88.6\% \\
\OursRowColor \ours{} (Ours)
& 61.27 & 83.33 & $1685.35{+}629.64$ & 89.99 & 78.45 & 90.63 & 76.40 & \textbf{96.2\%} \\
\midrule

\rowcolor{mygray}
\multicolumn{9}{c}{\textit{Retain 10\% $\bar{T}_{\mathrm{fix}}$}} \\
MMTok
& 56.50 & 78.95 & $1552.43{+}433.92$ & 86.54 & 55.97 & 86.56 & 44.40 & 82.0\% \\
\OursRowColor \ours{} (Ours)
& 60.41 & 81.95 & $1668.39{+}569.64$ & 89.22 & 74.46 & 88.15 & 66.80 & \textbf{92.5\%} \\
\midrule

\rowcolor{mygray}
\multicolumn{9}{c}{\textit{Retain 5\% $\bar{T}_{\mathrm{fix}}$}} \\
MMTok
& 52.73 & 73.19 & $1452.21{+}391.07$ & 85.43 & 47.39 & 82.20 & 37.00 & 75.8\% \\
\OursRowColor \ours{} (Ours)
& 58.01 & 78.43 & $1598.02{+}496.78$ & 88.14 & 66.60 & 85.03 & 55.00 & \textbf{86.6\%} \\
\bottomrule
\end{tabular}%
}
\caption{\textbf{Results on Qwen3-VL-8B.}
Token reduction is performed under the fixed-resolution setting. Therefore, the fixed-resolution full-token result is used as the reference for computing Avg.$^{\dagger}$. The dynamic-resolution full-token result is reported only as an auxiliary comparison. Avg.$^{\dagger}$ is computed over all seven benchmarks, with MME measured by the sum of perception and cognition scores.}
\label{tab:qwen3vl_8b}
\end{table*}

\begin{table*}[t!]
\centering
\renewcommand{\arraystretch}{1.0} 
\setlength{\tabcolsep}{3pt}
\resizebox{\textwidth}{!}{%
\begin{tabular}{l | c c c c c c c c}
\toprule
\multirow{2}{*}{\makecell[c]{\textbf{Method}}}
& \textbf{GQA} & \textbf{MMB} & \textbf{MME} & \textbf{POPE}
& \textbf{TextVQA} & \textbf{SQA} & \textbf{OCRBench}
& \makecell[c]{\textbf{Avg.}$^{\dagger}$} \\
~
& \makecell[c]{Acc. $\uparrow$}
& \makecell[c]{Acc. $\uparrow$}
& \makecell[c]{P+C $\uparrow$}
& \makecell[c]{F1 $\uparrow$}
& \makecell[c]{Acc. $\uparrow$}
& \makecell[c]{Acc. $\uparrow$}
& \makecell[c]{Acc. $\uparrow$}
& \makecell[c]{$\uparrow$} \\
\midrule

\rowcolor{mygray}
\multicolumn{9}{c}{\textit{Dynamic Resolution} $(\mathrm{MinPix}=256\times28\times28,\ \mathrm{MaxPix}=2048\times28\times28)$, \textit{Full-token Comparison}} \\
Qwen3-VL-32B
& 61.62 & 88.83 & $1790.85{+}705.71$ & 89.42 & 82.85 & 96.83 & 88.50 & 100.1\% \\
\midrule

\rowcolor{mygray}
\multicolumn{9}{c}{\textit{Fixed Resolution} $(\mathrm{MinPix}=\mathrm{MaxPix}=2048\times28\times28)$, \textit{Full-token Reference} $\mathbf{(100\%)}$} \\
Qwen3-VL-32B
& 62.00 & 89.00 & $1791.99{+}690.71$ & 90.20 & 84.23 & 97.17 & 85.20 & 100\% \\
\midrule

\rowcolor{mygray}
\multicolumn{9}{c}{\textit{Retain 20\% $\bar{T}_{\mathrm{fix}}$}} \\
MMTok
& 58.24 & 82.90 & $1730.30{+}623.93$ & 87.04 & 68.84 & 89.49 & 59.50 & 88.9\% \\
\OursRowColor \ours{} (Ours)
& 60.07 & 85.05 & $1734.37{+}643.93$ & 88.43 & 77.92 & 92.17 & 72.80 & \textbf{94.2\%} \\
\midrule

\rowcolor{mygray}
\multicolumn{9}{c}{\textit{Retain 10\% $\bar{T}_{\mathrm{fix}}$}} \\
MMTok
& 53.78 & 76.55 & $1567.39{+}430.36$ & 85.00 & 61.19 & 82.99 & 43.70 & 79.5\% \\
\OursRowColor \ours{} (Ours)
& 57.91 & 82.30 & $1662.00{+}568.21$ & 87.18 & 71.99 & 87.11 & 57.90 & \textbf{87.9\%} \\
\midrule

\rowcolor{mygray}
\multicolumn{9}{c}{\textit{Retain 5\% $\bar{T}_{\mathrm{fix}}$}} \\
MMTok
& 47.53 & 65.98 & $1391.51{+}388.57$ & 79.51 & 44.05 & 79.52 & 27.20 & 68.1\% \\
\OursRowColor \ours{} (Ours)
& 54.25 & 76.63 & $1562.83{+}501.07$ & 82.62 & 62.65 & 81.76 & 44.30 & \textbf{79.8\%} \\
\bottomrule
\end{tabular}%
}
\caption{\textbf{Results on Qwen3-VL-32B.}
Token reduction is performed under the fixed-resolution setting. Therefore, the fixed-resolution full-token result is used as the reference for computing Avg.$^{\dagger}$. The dynamic-resolution full-token result is reported only as an auxiliary comparison. Avg.$^{\dagger}$ is computed over all listed benchmarks, with MME measured by the sum of perception and cognition scores.}
\label{tab:qwen3vl_32b}
\end{table*}

\begin{table*}[t!]
\centering
\renewcommand{\arraystretch}{1.0} 
\setlength{\tabcolsep}{3pt}
\resizebox{\textwidth}{!}{%
\begin{tabular}{l | c c c c c c c c}
\toprule
\multirow{2}{*}{\makecell[c]{\textbf{Method}}}
& \textbf{GQA} & \textbf{MMB} & \textbf{MME} & \textbf{POPE}
& \textbf{TextVQA} & \textbf{SQA} & \textbf{OCRBench}
& \makecell[c]{\textbf{Avg.}$^{\dagger}$} \\
~
& \makecell[c]{Acc. $\uparrow$}
& \makecell[c]{Acc. $\uparrow$}
& \makecell[c]{P+C $\uparrow$}
& \makecell[c]{F1 $\uparrow$}
& \makecell[c]{Acc. $\uparrow$}
& \makecell[c]{Acc. $\uparrow$}
& \makecell[c]{Acc. $\uparrow$}
& \makecell[c]{$\uparrow$} \\
\midrule

\rowcolor{mygray}
\multicolumn{9}{c}{\textit{Dynamic Resolution} $(\mathrm{MinPix}=256\times28\times28,\ \mathrm{MaxPix}=2048\times28\times28)$, \textit{Full-token Comparison}} \\
Qwen2-VL-7B
& 62.38 & 79.38 & $1702.24{+}655.00$ & 87.77 & 81.95 & 85.15 & 85.30 & 100.5\% \\
\midrule

\rowcolor{mygray}
\multicolumn{9}{c}{\textit{Fixed Resolution} $(\mathrm{MinPix}=\mathrm{MaxPix}=2048\times28\times28)$, \textit{Full-token Reference} $\mathbf{(100\%)}$} \\
Qwen2-VL-7B
& 61.85 & 80.33 & $1649.47{+}656.43$ & 88.55 & 83.10 & 85.66 & 81.70 & 100\% \\
\midrule

\rowcolor{mygray}
\multicolumn{9}{c}{\textit{Retain 20\% $\bar{T}_{\mathrm{fix}}$}} \\
MMTok
& 60.03 & 76.63 & $1626.59{+}550.71$ & 88.22 & 78.42 & 83.47 & 69.60 & 94.8\% \\
\OursRowColor \ours{} (Ours)
& 61.18 & 78.61 & $1663.75{+}638.21$ & 89.20 & 81.40 & 84.46 & 76.20 & \textbf{98.2\%} \\
\midrule

\rowcolor{mygray}
\multicolumn{9}{c}{\textit{Retain 10\% $\bar{T}_{\mathrm{fix}}$}} \\
MMTok
& 58.73 & 70.02 & $1467.88{+}441.43$ & 88.21 & 70.24 & 82.22 & 53.60 & 87.2\% \\
\OursRowColor \ours{} (Ours)
& 60.48 & 77.92 & $1632.02{+}601.43$ & 88.24 & 78.55 & 83.33 & 66.20 & \textbf{94.9\%} \\
\midrule

\rowcolor{mygray}
\multicolumn{9}{c}{\textit{Retain 5\% $\bar{T}_{\mathrm{fix}}$}} \\
MMTok
& 56.23 & 63.40 & $1321.45{+}391.79$ & 86.58 & 55.84 & 80.38 & 40.80 & 79.0\% \\
\OursRowColor \ours{} (Ours)
& 58.84 & 73.97 & $1551.22{+}566.79$ & 88.61 & 73.38 & 82.95 & 54.80 & \textbf{90.2\%} \\
\bottomrule
\end{tabular}%
}
\caption{\textbf{Performance Comparison on Qwen2-VL-7B.}
Token reduction is performed under the fixed-resolution setting. Therefore, the fixed-resolution full-token result is used as the reference for computing Avg.$^{\dagger}$. The dynamic-resolution full-token result is reported only as an auxiliary comparison. Avg.$^{\dagger}$ is computed over all seven benchmarks, with MME measured by the sum of perception and cognition scores. MMTok is used as a baseline, and \ours{} denotes our method.}
\label{tab:qwen2_vl_7b}
\end{table*}


\end{document}